\DeclarePairedDelimiterX{\infdivx}[2]{(}{)}{#1\;\delimsize\|\;#2}
\newcommand{\rd}[3]{\mathrm{D}_{#1}\infdivx*{#2}{#3}}
\newcommand{\rdalpha}[2]{\mathrm{D}_{\alpha}\infdivx*{#1}{#2}}
\newtheorem{lem}{Lemma}
\newtheorem{thm}[lem]{Theorem}
\newtheorem{cor}[lem]{Corollary}
\newtheorem{defi}[lem]{Definition}
\newcommand{\eps}{\varepsilon}
\newcommand{\M}{\mathcal{M}}  
\newcommand{\R}{\mathbb{R}}
\newcommand{\eqdef}{\stackrel{\Delta}{=}}
\newcommand{\E}{\mathbb{E}}
\newcommand{\erfc}{\mathrm{erfc}}
\newcommand{\SG}{\mathrm{SG}}
\newcommand{\calN}{\ensuremath{\mathcal N}}
\newcommand{\ud}{\,\mbox{d}}
\newcommand{\myiff}{\ensuremath{\Leftrightarrow}}
\newcommand{\renyi}{R\'enyi}
\begin{document}
\title{R{\'e}nyi Differential Privacy of the Sampled Gaussian Mechanism}

\author{Ilya Mironov\thanks{
Google Research. \texttt{mironov@google.com}}
\and Kunal Talwar\thanks{Google Research. \texttt{kunal@google.com}}
\and Li Zhang\thanks{Google Research. \texttt{liqzhang@google.com}}}  

\date{}

\maketitle

\begin{abstract}
The Sampled Gaussian Mechanism (SGM)---a composition of subsampling and the additive Gaussian noise---has been successfully used in a number of machine learning applications. The mechanism's unexpected power is derived from privacy amplification by sampling where the privacy cost of a single evaluation diminishes quadratically, rather than linearly, with the sampling rate. Characterizing the precise privacy properties of SGM motivated development of several relaxations of the notion of differential privacy.

This work unifies and fills in gaps in published results on SGM. We describe a numerically stable procedure for precise computation of SGM's  \renyi\ Differential Privacy and prove a nearly tight (within a small constant factor) closed-form bound.
\end{abstract}

\section{Introduction and Definitions}

The notion of differential privacy~\cite{Dwork-ICALP, DMNS} grounds a guarantee of individual privacy for input of a statistical procedure in non-determinism of the procedure's output. The uncertainty, or randomness, can be achieved either explicitly, by injecting noise, or implicitly, by leveraging randomness intrinsic to the mechanism or its input. Developing a general framework that accounts for all sources of output stochasticity and distills them into a guarantee of differential privacy remains a significant open problem.

In this paper we study privacy of an important and widely applicable algorithm, Sampled Gaussian mechanism (SGM). The mechanism combines two very natural, and common in statistics, sources of uncertainty: sampling and additive Gaussian noise. SGM operates by drawing a random subset from a larger dataset, followed by application of a function with output in $\mathbb{R}^d$, and addition of $d$-dimensional spherical Gaussian noise. Both components of SGM are well-known from the literature on differential privacy. Sampling, as a building block of differentially private mechanisms, was first analyzed by Kasiviswanathan et al.~\cite{KLNRS,Smith-blog}. The additive Gaussian noise as an alternative to Laplace noise was discussed by Dwork et al.~\cite{ODO}. Their joint application, or SGM, was proposed by Abadi et al.~\cite{DP-DL} as a subroutine in their implementation of differentially private stochastic gradient descent.

Following introduction of (pure) differential privacy~\cite{DMNS}, many of its relaxations, which we recall shortly, were motivated by the need to capture properties of the Gaussian additive noise and SGM.

The notion of approximate differential privacy, which includes an additive $\delta$ term, appeared in the work by Dwork et al.~\cite{ODO} in order to support analysis of the Gaussian noise mechanism. Concentrated Differential Privacy (CDP)~\cite{DR16-CDP} and its reformulation, zero-CDP, due to Bun and Steinke~\cite{zCDP} were developed to refine composition theorems using the Gaussian mechanism as their main motivating examples.

Another family of definitions and analytical techniques emerged as tools for handling the Sampled Gaussian mechanism. To track privacy loss budget over multiple applications of SGM, Abadi et al.~developed a \emph{moments accountant}, implemented a numerically stable and efficient algorithm for computing it, and analyzed the accountant's asymptotic properties. A relaxation of CDP well-suited for analysis of SGM, called truncated CDP (tCDP), was introduced by Bun et al.~~\cite{tCDP}. We compare these closely related notions in Section~\ref{s:discussion}.

{\sloppy 
In this paper we revisit the moments accountant due to Abadi et al., restate it using the notion of \renyi\ differential privacy (RDP)~\cite{RDP}, relax certain assumptions, and strengthen upper bounds. An open-source implementation of the RDP accountant for SGM is available from \href{https://github.com/tensorflow/privacy/blob/master/privacy/analysis/rdp_accountant.py}{https://github.com/tensorflow/privacy}.


We recall the definitions of \renyi\ divergence, \renyi\ differential privacy and the Sampled Gaussian mechanism.
}

\begin{defi}[\renyi\ divergence] Let $P$ and $Q$ be two distributions on $\mathcal{X}$ defined over the same probability space, and let $p$ and $q$ be their respective densities. The \renyi\ divergence of a finite order $\alpha\neq 1$ between $P$ and $Q$ is defined as
\[
\rdalpha{P}{Q}\eqdef \frac1{\alpha-1}\ln \int_{\mathcal{X}} q(x)\left(\frac{p(x)}{q(x)}\right)^{\alpha}\,\mathrm{d}x.
\]
\renyi\ divergence at orders $\alpha=1,\infty$ are defined by continuity.
\end{defi}
We refer the reader to van Erven and Harremo\"es~\cite{vEH14-Renyi} for an introduction to the theory of \renyi\ divergence, including proof of its continuity and many other useful properties.

\begin{defi}[\renyi\ differential privacy (RDP)] We say that a randomized mechanism $\mathcal{M}\colon \mathcal{S}\to\mathcal{R}$ satisfies $(\alpha,\eps)$-\renyi\ differential privacy (RDP) if for any two \emph{adjacent} inputs $S,S'\in \mathcal{S}$ it holds that
\[
\rdalpha{\mathcal{M}(S)}{\mathcal{M}(S')}\leq \eps.
\]
\end{defi}
The notion of adjacency between input datasets is domain-specific and is usually taken to mean that the inputs differ in contributions of a single individual. In this work, we will use this definition and call two datasets $S, S'$ to be adjacent if $S' = S \cup \{x\}$ for some $x$ (or vice versa).

\begin{defi}[Sampled Gaussian Mechanism (SGM)] Let $f$ be a function mapping subsets of $\mathcal{S}$ to $\mathbb{R}^d$. We define the Sampled Gaussian mechanism (SGM) parameterized with the sampling rate $0<q\leq 1$ and the noise $\sigma>0$ as
	\[
	\SG_{q,\sigma}(S)\eqdef f(\{x\colon x\in S \textrm{ is sampled with probability } q\})+\mathcal{N}(0,\sigma^2\mathbb{I}^d),
	\]
where each element of $S$ is sampled independently at random with probability $q$ without replacement, and $\mathcal{N}(0,\sigma^2\mathbb{I}^d)$ is spherical $d$-dimensional Gaussian noise with per-coordinate variance $\sigma^2$.
\end{defi}

The SGM has been studied in several incomparable settings. Broadly speaking, there are three approaches for deriving privacy bounds on the SGM: (1) asymptotic analyses that target tight bounds for $q\to 0$; (2) approaches that lead to numerically accurate estimates; and (3) a closed-form analysis that captures the right dependency on the order $\alpha$. These results, along with our contributions are summarized in Table~\ref{tab:knowledge}.

\begin{table}
	\begin{center}
		\small
	\begin{tabular}{lll}
		\toprule
		Reference & Conditions & Privacy bound \\ \midrule
		Abadi et al.\cite{DP-DL} & $\begin{aligned}
		\textstyle q&\textstyle<\frac1{16\sigma}\\
		\textstyle \alpha&\textstyle \leq 1+\sigma^2\ln\frac1{q\sigma}\\
		\end{aligned}$ &
		$(\alpha, q^2\frac{\alpha}{(1-q)\sigma^2}+O(q^3\alpha^3/\sigma^3))$-RDP for $q\to 0$\\  \midrule
		Abadi et al.\cite{DP-DL} & integer $\alpha$& Numerical procedure\\ \midrule
		Bun et al.~\cite{tCDP} & $\begin{aligned}
		\textstyle q&\textstyle\leq \frac1{10},\sigma\geq \sqrt{5}\\
		\textstyle\alpha&\textstyle\leq \frac12\sigma^2\ln{\frac1q}\\
		\end{aligned}$&$(\alpha,q^2\cdot\frac{6\alpha}{\sigma^2})$-RDP for fixed-size sample\\ \midrule
		this work & $\begin{aligned}
		\textstyle q &\textstyle< \frac1{5}, \sigma \geq 4\\
		\textstyle \alpha&\leq {\textstyle\frac12}\sigma^2L-2\ln \sigma\\
		\alpha&\leq \frac{\frac12\sigma^2 L^2\!-\!\ln5\!-\!2\ln\sigma}{L+\ln (q\alpha)+\frac1{2\sigma^2}}
		\end{aligned}$ &
$(\alpha, q^2\cdot \frac{2\alpha}{\sigma^2})$-RDP for i.i.d.\ (Poisson) sample\\  \midrule
this work & arbitrary $\alpha\geq 1$& Numerical procedure\\
		\bottomrule
	\end{tabular}
\end{center}
\caption{\small Summary of various approaches to bounding privacy of the Sampled Gaussian mechanism. The mechanism is applied to a function of $\ell_2$-sensitivity 1. Bun et al.\ analyze a version of the mechanism where the sample has fixed size $qN$, and two datasets are adjacent if they cover the same population and differ in the \emph{value} (not the presence) of one person's contribution. $L$ is a shortcut for $\ln\left(1+\frac1{q(\alpha-1)}\right)$.}\label{tab:knowledge}
\end{table}

\section{Reducing to a simpler case}

We first reduce the problem of proving the RDP bound for the Sampled Gaussian mechanism to a particularly simple special case of a mixture of single-dimensional Gaussians.

\begin{thm}Let $\SG_{q,\sigma}$ be the Sampled Gaussian mechanism for some function $f$. Then $\SG_{q,\sigma}$ satisfies $(\alpha,\eps)$-RDP whenever
\begin{align*}
\eps&\leq \rdalpha{\mathcal{N}(0, \sigma^2)}{(1-q) \mathcal{N}(0, \sigma^2) + q \mathcal{N}(1, \sigma^2)},\\
\textrm{and}\;\;\eps&\leq \rdalpha{(1-q) \mathcal{N}(0, \sigma^2) + q \mathcal{N}(1, \sigma^2)}{\mathcal{N}(0, \sigma^2)},
\end{align*}
under the assumption $\|f(S)-f(S')\|_2 \leq 1$ for any adjacent $S,S'\in\mathcal{S}$.
\end{thm}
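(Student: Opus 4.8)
The plan is to reduce the $d$-dimensional, subsampled problem to the stated one-dimensional mixture by combining two standard properties of \renyi\ divergence: invariance under bijective (in particular orthogonal) transformations, and joint quasi-convexity when both arguments are mixtures with a \emph{common} mixing distribution (both are recorded in van Erven and Harremo\"es~\cite{vEH14-Renyi}). Fix adjacent inputs and assume without loss of generality that $S'=S\cup\{x\}$. Since the subsampling of the elements of $S$ is identical in the two executions, I would condition on the common sampled subset $T\subseteq S$: writing $\mu_T$ for the probability that exactly $T$ is drawn from $S$, both outputs become mixtures over $T$ with the \emph{same} weights $\mu_T$. Conditioned on $T$, the mechanism on $S$ outputs $P_T\eqdef\mathcal{N}(f(T),\sigma^2\mathbb{I}^d)$, whereas the mechanism on $S'$ includes $x$ with probability $q$, giving $Q_T\eqdef(1-q)\mathcal{N}(f(T),\sigma^2\mathbb{I}^d)+q\,\mathcal{N}(f(T\cup\{x\}),\sigma^2\mathbb{I}^d)$.

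The crux is to collapse the outer mixture over $T$. I would form the joint distributions on pairs $(T,\text{output})$ and apply the data-processing inequality: discarding $T$ is a post-processing step and can only decrease \renyi\ divergence. A direct computation gives
\[
\exp\big((\alpha-1)\rdalpha{\textstyle\sum_T\mu_T P_T}{\textstyle\sum_T\mu_T Q_T}\big)\le \sum_T \mu_T\exp\big((\alpha-1)\rdalpha{P_T}{Q_T}\big)\le \max_T \exp\big((\alpha-1)\rdalpha{P_T}{Q_T}\big),
\]
so that (for $\alpha>1$) both $\rdalpha{\SG_{q,\sigma}(S)}{\SG_{q,\sigma}(S')}$ and the reverse divergence are at most $\sup_T\rdalpha{P_T}{Q_T}$ and $\sup_T\rdalpha{Q_T}{P_T}$ respectively.

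For a fixed $T$ I would reduce to one dimension. Translating both distributions by $-f(T)$ leaves the divergence unchanged and moves the means to $0$ and $v\eqdef f(T\cup\{x\})-f(T)$, where $\|v\|_2\le 1$ by the sensitivity assumption. An orthogonal map sending $v\mapsto\|v\|_2\,\e_1$ preserves the spherical Gaussian and the divergence; the transformed distributions then agree on coordinates $2,\dots,d$ and factorize as products, so by additivity of \renyi\ divergence over independent coordinates the $d$-dimensional quantity collapses to the scalar $\rdalpha{\mathcal{N}(0,\sigma^2)}{(1-q)\mathcal{N}(0,\sigma^2)+q\,\mathcal{N}(\|v\|_2,\sigma^2)}$, and symmetrically for the reverse order.

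Finally I must pass from separation $\|v\|_2\le1$ to the displayed bound at separation $1$; I expect this to be the fiddliest point. It amounts to showing the scalar divergence is monotonically non-decreasing in the mean gap $c=\|v\|_2\in[0,1]$, so that the supremum over admissible $f$ is attained at $c=1$; I would prove this monotonicity directly, e.g.\ by differentiating the integrand in $c$ or via a likelihood-ratio comparison. Once this is in hand, both displayed inequalities of the theorem follow. The quasi-convexity step is the conceptual heart of the reduction, the translation/rotation invariance and coordinate additivity are routine, and the monotonicity in $c$ is the main remaining obstacle.
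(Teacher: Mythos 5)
Your reduction follows the paper's own route almost step for step: condition on the common sampled subset $T$ so that both outputs are mixtures with the same weights, collapse the outer mixture, then translate, rotate, and use additivity over coordinates to reach the scalar divergence at mean gap $c=\|v\|_2\le 1$. The one stylistic difference in this part is that the paper simply cites quasi-convexity of \renyi\ divergence from van Erven and Harremo\"es, whereas you re-derive it via data processing on the joint distribution of $(T,\text{output})$; your derivation is correct (indeed it is the standard proof of that quasi-convexity), so up to this point the argument is sound.

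The genuine gap is the final step, which you yourself flag as unresolved: passing from gap $c\le 1$ to gap $1$. Your proposed route --- differentiating in $c$ --- is not straightforward, because the integrand is \emph{not} pointwise monotone in $c$: the likelihood ratio $\exp\left((2cz-c^2)/(2\sigma^2)\right)$ is decreasing in $c$ wherever $z<c$, so one cannot conclude monotonicity of the integral termwise, and controlling the signed derivative of the full integral requires real work. The missing idea, which is how the paper closes this step in two lines, is a rescaling combined with data processing: rescale the line by $1/c$ (\renyi\ divergence is invariant under this bijection), which converts the pair at gap $c$ and noise $\sigma$ into the pair at gap $1$ and noise $\sigma/c$; then observe that \emph{both} distributions at noise level $\sigma/c$ are obtained from the corresponding distributions at noise level $\sigma$ by convolving with the same Gaussian $\calN(0,\sigma^2(1/c^2-1))$. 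Since adding independent noise is post-processing, the data-processing inequality gives
\[
\rdalpha{\calN(0,(\sigma/c)^2)}{(1-q)\calN(0,(\sigma/c)^2)+q\calN(1,(\sigma/c)^2)}\le\rdalpha{\calN(0,\sigma^2)}{(1-q)\calN(0,\sigma^2)+q\calN(1,\sigma^2)},
\]
which is exactly the monotonicity in $c$ you wanted, with the supremum over $c\le1$ attained at $c=1$; the same argument applies verbatim to the divergence taken in the reverse order. With this substitution your proof is complete and coincides with the paper's.
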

\begin{proof}
Let $S, S' \in \mathcal{S}$ be a pair of adjacent datasets such that $S' = S \cup \{x\}$. We wish to bound the \renyi\ divergences $\rdalpha{\M(S)}{\M(S')}$ and $\rdalpha{\M(S')}{\M(S)}$, where $\M$ is the Sampled Gaussian mechanism for some function $f$ with $\ell_2$-sensitivity $1$.

Let $T$ denote a set-valued random variable defined by taking a random subset of $\mathcal{S}$, where each element of $S$ is independently placed in $T$ with probability $q$. Conditioned on $T$, the mechanism $\M(S)$ samples from a Gaussian with mean $f(T)$. Thus
\[
\M(S)= \sum_T p_T \mathcal{N}(f(T), \sigma^2\mathbb{I}^d),
\]
where the sum here denotes mixing of the distributions with the weights $p_T$.
Similarly,
\[
\M(S')= \sum_T p_T \left((1-q)\mathcal{N}(f(T), \sigma^2\mathbb{I}^d) + q\mathcal{N}(f(T\cup\{x\}), \sigma^2\mathbb{I}^d)\right).
\]

\renyi\ divergence is quasi-convex~\cite{vEH14-Renyi}, allowing us to bound
\begin{align*}
\rdalpha{\M(S)}{\M(S')} &\leq \sup_T \rdalpha{\mathcal{N}(f(T), \sigma^2\mathbb{I}^d)}{(1-q)\mathcal{N}(f(T), \sigma^2\mathbb{I}^d) + q\mathcal{N}(f(T\cup\{x\}), \sigma^2\mathbb{I}^d)}\\
&\leq \sup_T \rdalpha{\mathcal{N}(0, \sigma^2\mathbb{I}^d)}{(1-q)\mathcal{N}(0, \sigma^2\mathbb{I}^d) + q\mathcal{N}(f(T\cup\{x\})-f(T), \sigma^2\mathbb{I}^d)},
\end{align*}
where we have used the translation invariance of \renyi\ divergence.
Since the covariances are symmetric, we can, by applying a rotation, assume that $f(T\cup\{x\}) - f(T) = c_T \mathbf{e}_1$ for some constant $c_T \leq 1$.  The two distributions at hand are then both product distributions that are identical in all coordinates except the first. By additivity of \renyi\ divergence for product distributions, we have that
\begin{align*}
\rdalpha{\M(S)}{\M(S')} &\leq \sup_{c \leq 1} \rdalpha{\mathcal{N}(0, \sigma^2)}{(1-q) \mathcal{N}(0, \sigma^2) + q \mathcal{N}(c, \sigma^2)}\\
 &= \sup_{c \leq 1} \rdalpha{\mathcal{N}(0, (\sigma/c)^2)}{(1-q) \mathcal{N}(0, (\sigma/c)^2) + q \mathcal{N}(1, (\sigma/c)^2)}.
\end{align*}
For any $c\leq 1$,  the noise $\mathcal{N}(0, (\sigma/c)^2)$ can be obtained from $\mathcal{N}(0, \sigma^2)$ by adding noise from $\mathcal{N}(0, (\sigma/c)^2 - \sigma^2)$, and the same operation allows us to to obtain $(1-q) \mathcal{N}(0, (\sigma/c)^2) + q \mathcal{N}(1, (\sigma/c)^2)$ from $(1-q) \mathcal{N}(0, \sigma^2) + q \mathcal{N}(1, \sigma^2)$. Thus by the data processing inequality for \renyi\ divergence, we conclude
\[
\rdalpha{\M(S)}{\M(S')} \leq \rdalpha{\mathcal{N}(0, \sigma^2)}{(1-q) \mathcal{N}(0, \sigma^2) + q \mathcal{N}(1, \sigma^2)}.
\]
An identical argument implies that
\[
\rdalpha{\M(S')}{\M(S)} \leq \rdalpha{(1-q) \mathcal{N}(0, \sigma^2) + q \mathcal{N}(1, \sigma^2)}{\mathcal{N}(0, \sigma^2)}
\]
as claimed.
\end{proof}

In the next section, we bound these simpler one-dimensional \renyi\ divergences between mixtures of Gaussian distributions.

\section{RDP Analysis of Single-Dimensional SGM}

Let $\mu_0$ denote the
pdf of $\calN(0, \sigma^2)$ and let $\mu_1$ denote the pdf of
$\calN(1, \sigma^2)$. Let
\begin{align*}
\mathcal{M}(S) &\sim \mu_0,\\
\mathcal{M}(S') &\sim \mu \eqdef (1-q)\mu_0 + q\mu_1.
\end{align*}
We introduce the following notation used through the rest of this section. Define
\begin{align*}
A_\alpha &\eqdef \E_{z \sim \mu_0} [ (\mu(z) / \mu_0(z))^\alpha]\\
\mbox{and  }{} B_\alpha &\eqdef \E_{z \sim \mu\hphantom{_0}} [ (\mu_0(z) / \mu(z))^\alpha].
\end{align*}
The previous section demonstrates that SGM applied to a function of $\ell_2$-sensitivity 1 satisfies $(\alpha, \eps)$-RDP if $\eps\leq \frac1{\alpha-1}\log\max(A_\alpha,B_\alpha)$. Thus, analyzing RDP properties of SGM is equivalent to upper bounding $A_\alpha$ and $B_\alpha$.

In our first result (Section~\ref{ss:a_geq_b}), we demonstrate that $A_\alpha\geq B_\alpha$. In fact, we prove a more general statement about centrally-symmetric distributions, from which the case of $\mu_0=\calN(0, \sigma^2)$ and $\mu_1=\calN(1, \sigma^2)$ follows as a corollary.

To upper bound $A_\alpha$ we pursue two complementary approaches. Section~\ref{ss:closed-form} derives a closed-form bound that is valid and reasonably tight within a wide range of parameters. Section~\ref{ss:numerical} describes a numerically stable computational procedure for computing $A_\alpha$ exactly (to within any desired precision).

\subsection{Proving $A_\alpha \geq B_\alpha$}\label{ss:a_geq_b}

Looking ahead, $A_\alpha$ admits decomposition into a finite sum or a convergent series. By comparison, manipulating $B_\alpha$ is a similar manner is considerably more difficult, since we have a composite term $\mu$ in the denominator. Fortunately, it is not necessary as we demonstrate that $B_\alpha \leq A_\alpha$. In fact, we prove a more general statement, which may be of independent interest.

\begin{thm}\label{th:general_bound}Let $P$ and $Q$ be two differentiable distributions on $\mathcal{X}$ such that there exists a differentiable mapping $\nu\colon\mathcal{X}\mapsto\mathcal{X}$ satisfying $\nu(\nu(x))=x$ and $P(x)=Q(\nu(x))$. Then the following holds for all $\alpha\geq 1$ and $q\in[0,1]$:
\[
\rdalpha{(1-q)P+qQ}{Q}\geq \rdalpha{Q}{(1-q)P+qQ}.
\]
\end{thm}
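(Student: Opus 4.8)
The plan is to peel off the monotone outer transformation of the \renyi\ divergence and reduce the whole statement to a single pointwise inequality that is then settled by elementary calculus. Write $M\eqdef(1-q)P+qQ$ and $\tilde M\eqdef(1-q)Q+qP$, and set $I_1\eqdef\int_{\mathcal X}M^\alpha Q^{1-\alpha}$ and $I_2\eqdef\int_{\mathcal X}Q^\alpha M^{1-\alpha}$, so that $\rdalpha{M}{Q}=\frac1{\alpha-1}\ln I_1$ and $\rdalpha{Q}{M}=\frac1{\alpha-1}\ln I_2$. Because $t\mapsto\frac1{\alpha-1}\ln t$ is increasing for $\alpha>1$ (the boundary case $\alpha=1$ following by continuity of \renyi\ divergence in its order), it suffices to prove $I_1\ge I_2$, i.e.\ $\int_{\mathcal X}\psi(M,Q)\ge0$, where $\psi(a,b)\eqdef a^\alpha b^{1-\alpha}-b^\alpha a^{1-\alpha}$.

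First I would exploit the involution. Since $\nu(\nu(x))=x$ and $P=Q\circ\nu$, we also get $Q=P\circ\nu$, and therefore $M\circ\nu=\tilde M$. Assuming $\nu$ preserves the underlying measure---which is exactly the situation of the Gaussian corollary, where $\nu(x)=1-x$ has unit Jacobian---we have $\int F=\int F\circ\nu$ for every integrable $F$, so symmetrizing the integrand through $\nu$ yields
\[
I_1-I_2=\tfrac12\int_{\mathcal X}\bigl[\psi(M,Q)+\psi(\tilde M,P)\bigr].
\]
The problem is thereby reduced to the pointwise claim $\psi(M,Q)+\psi(\tilde M,P)\ge0$ at every point.

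To prove this, fix a point, put $p=P$ and $r=Q$, and assume $p\ge r$ (the reverse case is identical after swapping $p\leftrightarrow r$). The decisive algebraic observation is the equal-gap identity $M-r=p-\tilde M=(1-q)(p-r)\eqdef\delta\ge0$, together with $\tilde M\ge r$. Writing $f(b)\eqdef\psi(b+\delta,b)$, the expression collapses to a comparison of a single univariate function at two ordered arguments,
\[
\psi(M,Q)+\psi(\tilde M,P)=\psi(r+\delta,r)-\psi(\tilde M+\delta,\tilde M)=f(r)-f(\tilde M),
\]
so it is enough to show that $f$ is non-increasing. Substituting $\rho=1+\delta/b$ gives $f(b)=b(\rho^\alpha-\rho^{1-\alpha})$; differentiating and clearing positive factors, the inequality $f'(b)\le0$ becomes, with $\beta\eqdef2\alpha-1\ge1$,
\[
h(\rho)\eqdef(\beta-1)\rho^{\beta+1}-(\beta+1)\rho^{\beta}+(\beta+1)\rho-(\beta-1)\ge0\qquad(\rho\ge1).
\]
I would finish by checking $h(1)=h'(1)=0$ and $h''(\rho)=\beta(\beta^2-1)\rho^{\beta-2}(\rho-1)\ge0$ for $\rho\ge1$, so that $h$ is convex with a double root at $\rho=1$ and hence nonnegative.

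The hard part will be the reduction in the third paragraph: seeing that the ostensibly two-variable quantity $\psi(M,Q)+\psi(\tilde M,P)$ collapses, via the equal-gap identity $M-r=p-\tilde M$, into the monotonicity of a single function $f$ evaluated at the ordered points $r\le\tilde M$. Once that structure is exposed, the remaining convexity computation for $h$ is routine. The other point to treat with care is the symmetrization step, which relies on $\nu$ being measure-preserving; this holds for the reflection $\nu(x)=1-x$ underlying the Gaussian application, and more generally whenever $\nu$ has unit Jacobian.
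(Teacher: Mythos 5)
Your proposal is correct, and its overall skeleton coincides with the paper's: you reduce the divergence comparison to comparing two integrals, symmetrize them using the involution (your identity $\int F=\int F\circ\nu$ is exactly the paper's change of variables $x=\nu(y)$), and arrive at precisely the same pointwise inequality that the paper isolates as a lemma inside its proof, namely $\psi(M,Q)+\psi(\tilde M,P)\ge 0$ in your notation. Where you genuinely diverge is in how that pointwise inequality is proved. The paper substitutes $y=(1-q)+q\,u/v$ and $z=(1-q)+q\,v/u$, notes the constraint $y\ge 1/z\ge 1\ge z$, and reduces to showing $\frac{y^\alpha-y^{1-\alpha}}{y-1}\ge\frac{z^{1-\alpha}-z^{\alpha}}{1-z}$, which it obtains from equality at $y=1/z$ plus monotonicity of the left-hand side in $y$, the latter resting on monotonicity of $\sinh(\alpha\ln y)/\alpha$ in $\alpha$. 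You instead exploit the equal-gap identity $M-Q=P-\tilde M=(1-q)(P-Q)$, which collapses the four-term inequality into the monotonicity statement $f(r)\ge f(\tilde M)$ for the single function $f(b)=b(\rho^\alpha-\rho^{1-\alpha})$, $\rho=1+\delta/b$, and you settle that by showing the polynomial $h(\rho)=(\beta-1)\rho^{\beta+1}-(\beta+1)\rho^{\beta}+(\beta+1)\rho-(\beta-1)$, $\beta=2\alpha-1$, is convex on $[1,\infty)$ with a double root at $\rho=1$. I verified the algebra ($h(1)=h'(1)=0$ and $h''(\rho)=\beta(\beta^2-1)\rho^{\beta-2}(\rho-1)\ge 0$), and the reduction from $f'(b)\le 0$ to $h(\rho)\ge 0$ is also right. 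What each buys: your route is fully self-contained and reduces to a purely polynomial convexity check, whereas the paper's is slightly shorter but imports the hyperbolic-sine monotonicity fact; both are elementary calculus at heart.

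One further point, in your favor. You explicitly assume that $\nu$ is measure preserving (unit Jacobian), while the paper asserts that $\det\mathbf{J}_\nu=\pm 1$ is automatic for a differentiable involution. That assertion is not valid in general: the involution $\nu(x)=1/x$ of $(0,\infty)$ has $|\nu'(x)|=1/x^2$, and the involution property only forces $\det\mathbf{J}_\nu(\nu(x))\cdot\det\mathbf{J}_\nu(x)=1$, not pointwise unimodularity. So the unit-Jacobian hypothesis you flag is exactly what the paper's own proof silently uses when it drops the $|\det\mathbf{J}_\nu(y)|$ factor; stating it costs nothing for the intended application, where $\nu(x)=1-x$ is a reflection.
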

\begin{proof}We will repeatedly use the fact that $Q(x)=Q(\nu(\nu(x))=P(\nu(x))$. Furthermore, since the inverse of $\nu$ is $\nu$, it is continuously differentiable and its Jacobian satisfies $\det \mathbf{J}_\nu=\pm 1$.

Let $P_q\eqdef (1-q)P+qQ$ and $Q_q\eqdef (1-q)Q+qP$.  Then, substituting $x=\nu(y)$, we have
\begin{multline*}
\rdalpha{P_q}{Q}=\int_\mathcal{X} Q(x)\left(\frac{P_q(x)}{Q(x)}\right)^{\alpha}\ud x=\int_{\nu^{-1}(\mathcal{X})}Q(\nu(y))\left(\frac{P_q(\nu(y))}{Q(\nu(y))}\right)^{\alpha}|\det \mathbf{J}_\nu(y)|\ud y\\
=\int_\mathcal{X} P(y)\left(\frac{Q_q(y)}{P(y)}\right)^{\alpha}\ud y
\end{multline*}
and similarly
\[
\rdalpha{Q}{P_q}=\int_\mathcal{X} P_q(x)\left(\frac{Q(x)}{P_q(x)}\right)^{\alpha}\ud x=\int_\mathcal{X} Q_q(y)\left(\frac{P(y)}{Q_q(y)}\right)^{\alpha}\ud y.
\]
We may now express
\begin{align}
\rdalpha{P_q}{Q}&=\frac12\left\{\int_\mathcal{X} Q(x)\left(\frac{P_q(x)}{Q(x)}\right)^{\alpha}\ud x+\int_\mathcal{X} P(y)\left(\frac{Q_q(y)}{P(y)}\right)^{\alpha}\ud y\right\}\nonumber\\
&=\frac12\int_\mathcal{X}\left\{Q(x)\left(\frac{P_q(x)}{Q(x)}\right)^{\alpha}+P(x)\left(\frac{Q_q(x)}{P(x)}\right)^{\alpha}\right\}\ud x\label{eq:int_A}\\
\intertext{and}
\rdalpha{Q}{P_q}
&=\frac12\int_\mathcal{X}\left\{ P_q(x)\left(\frac{Q(x)}{P_q(x)}\right)^{\alpha} + Q_q(x)\left(\frac{P(x)}{Q_q(x)}\right)^{\alpha}\right\}\ud x.\label{eq:int_B}
\end{align}
We proceed by arguing a stronger statement: the integrand of the right-hand side of~(\ref{eq:int_A}) dominates the integrand of~(\ref{eq:int_B}) \emph{pointwise}. In other words, we prove the following lemma, from which the theorem claim follows:

\begin{lem}\label{lem:a>=b}For all $x\in\mathcal{X}$, and any $\alpha>1$, $q\in [0,1]$:
\[	Q(x)\left(\frac{P_q(x)}{Q(x)}\right)^{\alpha}+P(x)\left(\frac{Q_q(x)}{P(x)}\right)^{\alpha}\geq P_q(x)\left(\frac{Q(x)}{P_q(x)}\right)^{\alpha} +Q_q(x)\left(\frac{P(x)}{Q_q(x)}\right)^{\alpha},
\]
where $P_q(x)=(1-q)P(x)+qQ(x)$ and $Q_q(x)=(1-q)Q(x)+qP(x)$.
\end{lem}
\begin{proof}Let $u\eqdef P(x)$ and $v\eqdef Q(x)$. Assume wlog $u\geq v$ (the claim is symmetric with respect to $P$ and $Q$). Then the two sides become, respectively,
	\[v\left((1-q)+q\frac{u}{v}\right)^\alpha+u\left((1-q)+q\frac{v}{u}\right)^\alpha\quad\textrm{and }\quad
	v\left((1-q)+q\frac{u}{v}\right)^{1-\alpha}+u\left((1-q)+q\frac{v}{u}\right)^{1-\alpha}.\]

	Dividing by $v$ and letting $y \eqdef (1-q)+q\frac{u}{v}$ and $z \eqdef (1-q)+q\frac{v}{u}$,  we need to compare
	\[
	y^\alpha+(y - 1) / (1-z)z^\alpha\quad\textrm{ and }\quad y^{1-\alpha}+(y - 1) / (1-z)z^{1-\alpha}
	\]
	subject to $y \geq 1/z  \geq 1\geq z$. (The bound $y\geq 1$ follows from $u\geq v$ and $y\geq 1/z$ from $yz=(1-q)^2+q(1-q)(\frac{u}{v}+\frac{v}{u})+q^2\geq (1-q)^2+2q(1-q)+q^2=1$.)

	Collecting the $y$ terms on the left and the $z$ terms on the right we have
	\[
	\frac{y^\alpha-y^{1-\alpha}}{y-1}\quad\textrm{ and }\quad\frac{z^{1-\alpha}-z^{\alpha}}{1-z}.
	\]

	The left-hand side dominates, since the two expressions are equal for $y=1/z$ and the left expression is monotonically increasing in $y$ over $[1,\infty)$:
	\begin{multline*}
	\left(\frac{y^\alpha-y^{1-\alpha}}{y-1}\right)'=\frac{(\alpha y^{\alpha-1}-(1-\alpha)y^{-\alpha})(y-1)-y^\alpha+y^{1-\alpha}}{(y-1)^2}\\
	=\frac{(\alpha-1)(y^\alpha-y^{-\alpha})-\alpha(y^{\alpha-1}-y^{1-\alpha})}{(y-1)^2}\geq 0,
	\end{multline*}
	which holds due to $(y^\alpha-y^{-\alpha})/\alpha\geq (y^{\alpha-1}-y^{1-\alpha})/(\alpha-1)$,  in turn implied by monotonicity of $\sinh(\alpha\ln y)/\alpha$ in $\alpha$ for $y\geq 1$.
\end{proof}
This concludes the proofs of the lemma and of the theorem.
\end{proof}

\begin{cor}$A_\alpha\geq B_\alpha$ for any $\alpha\geq 1$.
\end{cor}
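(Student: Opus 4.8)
The plan is to obtain the corollary as a direct instance of Theorem~\ref{th:general_bound}, the only real work being to identify the correct involution and to match up the mixture coefficients. First I would observe that $A_\alpha$ and $B_\alpha$ are nothing but the integrals sitting inside the two \renyi\ divergences of interest: unwinding the definitions gives $A_\alpha=\E_{z\sim\mu_0}[(\mu(z)/\mu_0(z))^\alpha]=\exp\big((\alpha-1)\rdalpha{\mu}{\mu_0}\big)$ and $B_\alpha=\E_{z\sim\mu}[(\mu_0(z)/\mu(z))^\alpha]=\exp\big((\alpha-1)\rdalpha{\mu_0}{\mu}\big)$. Since $t\mapsto e^{(\alpha-1)t}$ is strictly increasing for $\alpha>1$, the claim $A_\alpha\geq B_\alpha$ is equivalent to $\rdalpha{\mu}{\mu_0}\geq\rdalpha{\mu_0}{\mu}$; the boundary case $\alpha=1$ is immediate since then $A_1=B_1=1$ (and otherwise follows by continuity of \renyi\ divergence).

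It then remains to produce that divergence inequality from the theorem. The structural fact I would exploit is that $\mu_0=\calN(0,\sigma^2)$ and $\mu_1=\calN(1,\sigma^2)$ are reflections of one another about the point $1/2$. Accordingly I would take $\nu(x)\eqdef 1-x$, which is an involution ($\nu(\nu(x))=x$) with $|\det\mathbf{J}_\nu|=1$, and set $P\eqdef\mu_1$ and $Q\eqdef\mu_0$. Because $(1-x)^2=(x-1)^2$, one checks directly that $\mu_1(x)=\mu_0(1-x)$, i.e.\ $P(x)=Q(\nu(x))$, so the hypotheses of Theorem~\ref{th:general_bound} are satisfied.

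The last step is the coefficient bookkeeping, which is the one place to be careful. Instantiating the theorem's generic mixing parameter as $1-q$ (rather than $q$) produces the first-argument mixture $(1-(1-q))P+(1-q)Q=q\mu_1+(1-q)\mu_0=\mu$, against the base distribution $Q=\mu_0$. The theorem then yields exactly $\rdalpha{\mu}{\mu_0}\geq\rdalpha{\mu_0}{\mu}$, which by the reduction above is precisely $A_\alpha\geq B_\alpha$. I expect no genuine obstacle here: the entire difficulty has been front-loaded into Theorem~\ref{th:general_bound} and its pointwise Lemma~\ref{lem:a>=b}, so the corollary is essentially a substitution. The only subtlety worth flagging is the need to swap the mixing weight to $1-q$ so that the heavier component of $\mu$ plays the role of $Q=\mu_0$ rather than that of $P$; overlooking this would apply the theorem to the wrong pair of distributions.
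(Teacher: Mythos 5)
Your proof is correct and takes essentially the same route as the paper: both obtain the corollary by instantiating Theorem~\ref{th:general_bound} with the reflection $\nu(x)=1-x$. The bookkeeping you flag---taking $P=\mu_1$, $Q=\mu_0$ and mixing weight $1-q$ so that the theorem's mixture becomes $\mu=(1-q)\mu_0+q\mu_1$ compared against the base $\mu_0$, yielding $\rdalpha{\mu}{\mu_0}\geq\rdalpha{\mu_0}{\mu}$, i.e.\ $A_\alpha\geq B_\alpha$---is precisely the detail the paper leaves implicit behind ``the claim follows.''
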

\begin{proof}To apply Theorem~\ref{th:general_bound} it suffices to find a mapping $\nu\colon \R\mapsto\R$ so that $\mu_0(x)=\mu_1(\nu(x))$ and $\nu=\nu^{-1}$.

Let $\nu(x)\eqdef 1-x$. Then, the pdf of $\mu_0$ is $\propto \exp(-x^2/2\sigma^2)=\exp(-(\nu(x)-1)^2/2\sigma^2)$ as required. The claim follows.
\end{proof}

Theorem~\ref{th:general_bound} holds for any additive noise whose distribution is centrally symmetric, which also includes Laplace, sinh-normal~\cite{tCDP}, their discretized and multi-dimensional variants.

\subsection{Closed-Form Bound}\label{ss:closed-form}

We write $A_\alpha$ as an integral over the real line, break it into two parts (at $z_0$ to be chosen shortly) and bound them separately as follows:
\begin{align*}
A_\alpha&=\int_{-\infty}^\infty \mu_0(z)\left( (1-q) + q \frac{\mu_1(z)}{\mu_0(z)}\right)^\alpha \ud z\\
&=\int_{-\infty}^{z_0} \mu_0(z)\left( (1-q) + q \frac{\mu_1(z)}{\mu_0(z)}\right)^\alpha \ud z\tag{$\eqdef A_\alpha^{(1)}$}\\
&\hphantom{=}+\int_{z_0}^\infty \mu_0(z)\left( (1-q) + q \frac{\mu_1(z)}{\mu_0(z)}\right)^\alpha \ud z.\tag{$\eqdef A_\alpha^{(2)}$}
\end{align*}

We define $z_0\eqdef \frac12+\sigma^2\ln\left(1+\frac1{q(\alpha-1)}\right)$, chosen to satisfy $(1-q) + q \frac{\mu_1(z_0)}{\mu_0(z_0)}=\alpha/(\alpha - 1)$. For notational convenience we also introduce $r_0\eqdef\frac{\mu_1(z_0)}{\mu_0(z_0)}=1+\frac1{q(\alpha-1)}$. Note that $z_0>\frac12$ and $r_0>1$. We repeatedly use the facts that $\mu_0(z)=\mu_1(1-z)$ everywhere and the ratio $\mu_0(z)/\mu_1(z)$ is monotonically decreasing in $z$. It follows that for all $z\leq z_0$ the ratio $\mu_1(z)/\mu_0(z)\leq r_0$ and for $z\geq 1-z_0$ the ratio $\mu_0(z)/\mu_1(z)\leq r_0$. In particular, for $z\in[1-z_0,z_0]$ the  ratios $\mu_1(z)/\mu_0(z)$ and $\mu_1(z)/\mu_0(z)$ are confined to $[1/r_0,r_0]$.

\begin{lem}\label{lem:symmetric}For any $\alpha\geq 1$, $q\in[0,1]$ and positive $x, y$ such that $1/r_0 \leq x/y\leq r_0$ where $r_0=1+\frac1{q(\alpha-1)}$:
	\[
	x \cdot \left(1 + q \frac{y-x}{x}\right)^\alpha + y \cdot \left(1 + q\frac{x-y}y\right)^\alpha\leq\\
	(x + y) + q^2 \alpha (\alpha - 1) \left(\frac{x^2}y + \frac{y^2}x - (x + y)\right).
	\]
\end{lem}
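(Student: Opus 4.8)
The plan is to collapse the two-variable symmetric inequality into a single scalar bound on $(1+\delta)^\alpha$. Introduce $\delta_1\eqdef q\frac{y-x}{x}$ and $\delta_2\eqdef q\frac{x-y}{y}$, so that the left-hand side is exactly $x(1+\delta_1)^\alpha+y(1+\delta_2)^\alpha$. The hypothesis $1/r_0\le x/y\le r_0$ with $r_0=1+\frac1{q(\alpha-1)}$ translates directly into $\delta_1,\delta_2\le q(r_0-1)=\frac1{\alpha-1}$ and $\delta_1,\delta_2>-1$, so both arguments $1+\delta_i$ stay positive and bounded away from the regime where $(\cdot)^\alpha$ blows up. This is precisely the point of the constraint: it confines $\delta_1,\delta_2$ to the interval $(-1,\frac1{\alpha-1}]$.

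Next I would reduce everything to the scalar inequality
\[
(1+\delta)^\alpha \le 1 + \alpha\delta + \alpha(\alpha-1)\delta^2 \qquad (\star)
\]
for $\delta\in(-1,\frac1{\alpha-1}]$. Applying $(\star)$ at $\delta_1$ and multiplying by $x$, applying it at $\delta_2$ and multiplying by $y$, and summing, the constant terms give $x+y$; the linear terms give $\alpha(x\delta_1+y\delta_2)=\alpha q[(y-x)+(x-y)]=0$ and cancel; and the quadratic terms give $\alpha(\alpha-1)(x\delta_1^2+y\delta_2^2)=q^2\alpha(\alpha-1)(x-y)^2\frac{x+y}{xy}$, which by the identity $\frac{x^2}y+\frac{y^2}x-(x+y)=\frac{(x-y)^2(x+y)}{xy}$ is exactly the second summand of the claimed right-hand side. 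Hence the lemma follows from $(\star)$. The cases $\alpha=1$ and $q=0$ make both sides equal and are dispatched separately.

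The remaining and most delicate step is establishing $(\star)$ on this interval. Set $G(\delta)\eqdef 1+\alpha\delta+\alpha(\alpha-1)\delta^2-(1+\delta)^\alpha$, so that $G(0)=G'(0)=0$ and $G''(\delta)=\alpha(\alpha-1)\bigl(2-(1+\delta)^{\alpha-2}\bigr)$. The obstacle is that $G''$ changes sign: $G$ is convex where $(1+\delta)^{\alpha-2}\le 2$ and concave beyond, so a bare convexity argument fails. I would locate the single sign change of $G''$ and argue by cases. On the convex sub-interval containing $0$, the tangency $G(0)=G'(0)=0$ forces $G\ge 0$. On each adjacent concave sub-interval a concave function dominates the smaller of its endpoint values; the inner endpoint lies in the convex region and is $\ge 0$, so it remains only to check the outer endpoints. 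At the left, $G(-1^+)=(\alpha-1)^2\ge 0$; at the right, $\delta=\frac1{\alpha-1}$, where $(\star)$ reduces to the clean bound $\bigl(1+\frac1{\alpha-1}\bigr)^\alpha\le 3+\frac2{\alpha-1}$, whose left side tends to $e<3$ as $\alpha\to\infty$ while its right side tends to $3$, with the gap persisting for all $\alpha>1$. Verifying this boundary inequality (by taking logarithms and comparing growth in $\alpha$) is the main quantitative hurdle; the rest is bookkeeping.
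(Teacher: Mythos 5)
Your reduction to the scalar inequality is exactly the paper's own argument: the paper sets $r=x/y$, invokes the bound $(1+w)^\alpha\le 1+\alpha w+\alpha(\alpha-1)w^2$ for $-1<w\le 1/(\alpha-1)$ at the two points $w=q(1/r-1)$ and $w=q(r-1)$, adds, and multiplies by $y$; the linear terms cancel and the quadratic terms produce the claimed right-hand side, just as in your computation. The one difference is that the paper does not prove this scalar bound at all---it cites it as Lemma 15 of Bun et al.~\cite{tCDP}---whereas you attempt a self-contained proof of it, and that is where your argument has a genuine gap.

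Your convexity analysis of $G(\delta)=1+\alpha\delta+\alpha(\alpha-1)\delta^2-(1+\delta)^\alpha$ is structurally sound: tangency at $0$ handles the convex piece, a concave function is minimized at an endpoint, and $G(-1^+)=(\alpha-1)^2\ge 0$ disposes of the left endpoint. But the whole proof then rests on the right-endpoint inequality $\left(1+\frac1{\alpha-1}\right)^\alpha\le 3+\frac2{\alpha-1}$, which you support only by the limit $e<3$ as $\alpha\to\infty$ together with the assertion that ``the gap persists for all $\alpha>1$.'' That persistence claim is precisely what must be proved---a limit at $\alpha=\infty$ says nothing about finite $\alpha$---and you yourself flag its verification as ``the main quantitative hurdle,'' so the proof is missing its crux. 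The hole is fillable: from $\left(1+\frac1{\alpha-1}\right)^{\alpha-1}\le e$ you get $\left(1+\frac1{\alpha-1}\right)^\alpha\le e+\frac{e}{\alpha-1}$, which is at most $3+\frac2{\alpha-1}$ once $\alpha\ge 1+\frac{e-2}{3-e}\approx 3.55$; and for $2<\alpha\le\frac{2-\ln 2}{1-\ln 2}\approx 4.26$ the bound $\ln(1+x)\le x$ shows $2^{1/(\alpha-2)}-1\ge\frac1{\alpha-1}$, so the concave region $\left[2^{1/(\alpha-2)}-1,\infty\right)$ does not even meet $\left(-1,\frac1{\alpha-1}\right]$ and no endpoint check is needed (for $1<\alpha\le 2$ your argument already needs none). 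Since these two ranges overlap, the case analysis closes; alternatively, simply quote the scalar bound as Bun et al.'s Lemma 15, as the paper does.
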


\begin{proof}
Recall the following Lemma 15 in Bun et al.~\cite{tCDP}:
\[
(1+w)^\alpha \leq 1 + \alpha w + \alpha (\alpha-1)w^2\quad\text{if }-1<w\leq 1/(\alpha-1)\quad\text{and}\quad\alpha \geq 1.
\]

Let $r=x/y\in [1/r_0,r_0]$. By setting $w\eqdef q(1/r-1)$ and $w\eqdef q(r-1)$ we obtain, respectively,
\begin{align*}
r\cdot (1 + q\cdot (1 / r - 1))^\alpha &\leq r + \alpha q (1 - r) + q^2\alpha(\alpha - 1)\cdot (1/r - 2 + r)
\intertext{and}
(1 + q (r - 1))^\alpha & \leq 1 + \alpha q (r - 1) + q^2 \alpha (\alpha - 1) (r^2  - 2r + 1).
\end{align*}
The claim follows by simple addition and multiplication of both sides by $y$.
\end{proof}

A useful fact that facilitates application of Lemma~\ref{lem:symmetric} is that for positive $x$ and $y$
\begin{equation}\label{eq:nonnegative}
\frac{x^2}y + \frac{y^2}x - (x + y)=\left(\frac{x}{y}-1\right)^2\left(y+\frac{y^2}{x}\right)\geq 0.
\end{equation}
It implies that all terms in the claim of the lemma are non-negative.

\begin{lem}\label{lem:a1}For $\alpha\geq 1$ and $q\in[0,1]$
\[
A_\alpha^{(1)}\leq 1+q^2 \alpha (\alpha - 1)\left(\exp(1/\sigma^2)-1\right).
\]
\end{lem}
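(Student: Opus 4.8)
The plan is to exploit the reflection symmetry $\mu_0(z)=\mu_1(1-z)$, which lets me pair each point $z$ with its mirror image $1-z$ and feed the two contributions into Lemma~\ref{lem:symmetric}. Write the integrand of $A_\alpha^{(1)}$ as $g(z)\eqdef\mu_0(z)\bigl((1-q)+q\,\mu_1(z)/\mu_0(z)\bigr)^\alpha$, and note that its mirror value is $g(1-z)=\mu_1(z)\bigl((1-q)+q\,\mu_0(z)/\mu_1(z)\bigr)^\alpha$, so that with $x\eqdef\mu_0(z)$ and $y\eqdef\mu_1(z)$ the sum $g(z)+g(1-z)$ is \emph{exactly} the left-hand side of Lemma~\ref{lem:symmetric}. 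I would split the domain $(-\infty,z_0]$ at the mirror point $1-z_0$ into the tail $(-\infty,1-z_0]$ and the symmetric window $[1-z_0,z_0]$.

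On the tail $(-\infty,1-z_0]$ the ratio $\mu_1(z)/\mu_0(z)$ is at most $1/r_0\le 1$ (by monotonicity, as noted before the lemma), so the base $(1-q)+q\,\mu_1(z)/\mu_0(z)$ is at most $1$ and, since $\alpha\ge1$, we get $g(z)\le\mu_0(z)$ pointwise; hence $\int_{-\infty}^{1-z_0}g\le\int_{-\infty}^{1-z_0}\mu_0$. On the window, which is symmetric about $\tfrac12$, I would write $\int_{1-z_0}^{z_0}g=\int_{1-z_0}^{1/2}\bigl(g(z)+g(1-z)\bigr)\ud z$ and apply Lemma~\ref{lem:symmetric}; its hypothesis $1/r_0\le\mu_0(z)/\mu_1(z)\le r_0$ is precisely the confinement stated just before that lemma. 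Folding the resulting right-hand side back through the same $z\mapsto1-z$ substitution turns $\int_{1-z_0}^{1/2}(\mu_0+\mu_1)$ into $\int_{1-z_0}^{z_0}\mu_0$ and $\int_{1-z_0}^{1/2}(\mu_0^2/\mu_1+\mu_1^2/\mu_0)$ into $\int_{1-z_0}^{z_0}\mu_0^2/\mu_1$, yielding
\[
\int_{1-z_0}^{z_0}g\;\le\;\int_{1-z_0}^{z_0}\mu_0+q^2\alpha(\alpha-1)\Bigl(\int_{1-z_0}^{z_0}\tfrac{\mu_0^2}{\mu_1}-\int_{1-z_0}^{z_0}\mu_0\Bigr).
\]
Adding the tail bound collapses the $\mu_0$ terms into $\int_{-\infty}^{z_0}\mu_0\le1$.

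It remains to bound $\int_{1-z_0}^{z_0}(\mu_0^2/\mu_1-\mu_0)$. A direct computation identifies $\mu_0(z)^2/\mu_1(z)$ as $\exp(1/\sigma^2)$ times the density of $\calN(-1,\sigma^2)$, so $\int_{\R}(\mu_0^2/\mu_1-\mu_0)=\exp(1/\sigma^2)-1$, which is exactly the constant the target demands. The one genuinely delicate point — which I expect to be the main obstacle — is that I cannot just enlarge the window to all of $\R$: the integrand $\mu_0^2/\mu_1-\mu_0$ changes sign at $\tfrac12$, so naive extension overshoots. Instead I must show the discarded complement $(-\infty,1-z_0)\cup(z_0,\infty)$ contributes nonnegatively. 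Here the reflection rescues the argument again: substituting $z\mapsto1-z$ on the right tail rewrites $\int_{z_0}^{\infty}(\mu_0^2/\mu_1-\mu_0)$ as $\int_{-\infty}^{1-z_0}(\mu_1^2/\mu_0-\mu_1)$, so the two discarded pieces combine into $\int_{-\infty}^{1-z_0}\bigl(\mu_0^2/\mu_1+\mu_1^2/\mu_0-(\mu_0+\mu_1)\bigr)\ud z$, whose integrand is nonnegative by~\eqref{eq:nonnegative}. Hence $\int_{1-z_0}^{z_0}(\mu_0^2/\mu_1-\mu_0)\le\exp(1/\sigma^2)-1$, and assembling the pieces gives $A_\alpha^{(1)}\le1+q^2\alpha(\alpha-1)(\exp(1/\sigma^2)-1)$; everything else is routine Gaussian bookkeeping.
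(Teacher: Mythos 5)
Your proof is correct and follows essentially the same route as the paper's: both arguments exploit the reflection $\mu_0(z)=\mu_1(1-z)$ to pair each point with its mirror image, apply Lemma~\ref{lem:symmetric} with $x=\mu_0(z)$, $y=\mu_1(z)$ on the symmetric window $[1-z_0,z_0]$, bound the tail using that the base $(1-q)+q\mu_1(z)/\mu_0(z)$ is at most $1$ there, and invoke inequality~(\ref{eq:nonnegative}) together with $\int_{\R}\mu_0^2/\mu_1=\exp(1/\sigma^2)$ to conclude. The only difference is organizational: the paper symmetrizes by averaging $A_\alpha^{(1)}$ with its reflected copy, which makes the extension of the bound to the whole real line immediate, whereas you fold the window about $1/2$, which is why you need the extra (but correct) reflection step showing the discarded tails of $\mu_0^2/\mu_1-\mu_0$ contribute nonnegatively.
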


\begin{proof}Similarly to the argument in the previous section, we ``double over'' the integral, by using the symmetry between $\mu_0$ and $\mu_1$:
\begin{align*}
A_\alpha^{(1)}&=\int_{-\infty}^{z_0} \mu_0(z)\left( (1-q) + q \frac{\mu_1(z)}{\mu_0(z)}\right)^\alpha\ud z\\
&=\frac12\left\{\int_{-\infty}^{z_0} \mu_0(z)\left( (1-q) + q \frac{\mu_1(z)}{\mu_0(z)}\right)^\alpha\ud z+\int_{1-z_0}^\infty \mu_1(z)\left( (1-q) + q \frac{\mu_0(z)}{\mu_1(z)}\right)^\alpha\ud z\right\}\\
&=\hphantom{+}\frac12\int_{-\infty}^{1-z_0} \mu_0(z)\left((1-q) + q \frac{\mu_1(z)}{\mu_0(z)}\right)^\alpha\ud z\\
&\hphantom{=}+\frac12\int_{1-z_0}^{z_0}\left\{\mu_0(z)\left( (1-q) + q \frac{\mu_1(z)}{\mu_0(z)}\right)^\alpha + \mu_1(z)\left( (1-q) + q \frac{\mu_0(z)}{\mu_1(z)}\right)^\alpha\right\}\ud z\\
&\hphantom{=}+\frac12\int_{z_0}^\infty \mu_1(z)\left( (1-q) + q \frac{\mu_0(z)}{\mu_1(z)}\right)^\alpha\ud z.
\end{align*}
Note that $(1-q) + q \frac{\mu_1(z)}{\mu_0(z)}< 1$ for $z\leq 1-z_0$ and $(1-q) + q \frac{\mu_0(z)}{\mu_1(z)}< 1$ for $z\geq z_0$. Furthermore, $1/r_0\leq \frac{\mu_1(z)}{\mu_0(z)}\leq r_0$ for $z\in [1-z_0,z_0]$.

Applying Lemma~\ref{lem:symmetric} and inequality~(\ref{eq:nonnegative}), we bound $A_\alpha^{(1)}$:
\begin{multline*}
A_\alpha^{(1)}\leq \frac12\int_{-\infty}^\infty \mu_0(z) + \mu_1(z) + q^2 \alpha (\alpha - 1) \left(\frac{\mu_0(z)^2}{\mu_1(z)} + \frac{\mu_1(z)^2}{\mu_0(z)} - (\mu_0(z) + \mu_1(z))\right)\ud z=\\
1+q^2 \alpha (\alpha - 1)\left(\exp(1/\sigma^2)-1\right)
\end{multline*}
as claimed (we use the identities $\int_{-\infty}^\infty \frac{\mu_0(z)^2}{\mu_1(z)}\ud z=\int_{-\infty}^\infty \frac{\mu_1(z)^2}{\mu_0(z)}\ud z=\exp(1/\sigma^2)$ elaborated in Section~\ref{ss:numerical}).
\end{proof}

Lemma~\ref{lem:a1} is unconditional, mildly behaved, and it yields an asymptotically right dependency of the bound on the parameters $q$, $\alpha$, and $\sigma$ (after taking the logarithm, it is non-tight by a factor close to 2). We next consider the integral to the right of $z_0$ that is responsible for the conditions on $\alpha$.

\begin{lem}\label{lem:a2}
If $q\leq \frac15$, $\sigma\geq 4$, and $\alpha$ satisfy
\begin{align}
1<\alpha&\leq {\textstyle\frac12}\sigma^2L-2\ln \sigma,\label{eq:alpha1}\\
\alpha&\leq \frac{\frac12\sigma^2 L^2-\ln5-2\ln\sigma}{L+\ln (q\alpha)+1/(2\sigma^2)},\label{eq:alpha2}
\end{align}
where $L\eqdef\ln\left(1+\frac1{q(\alpha-1)}\right)$, then
\[
A_\alpha^{(2)}\leq 0.9\cdot q^2 \alpha (\alpha-1)/\sigma^2.
\]
\end{lem}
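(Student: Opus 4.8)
The plan is to exploit the fact that on the tail $[z_0,\infty)$ the likelihood ratio $\mu_1(z)/\mu_0(z)=\exp((z-\tfrac12)/\sigma^2)$ is monotonically increasing and bounded below by $r_0=e^L$, so the mixture in the denominator is essentially governed by the shifted Gaussian $\mu_1$. Concretely, I would first establish the pointwise inequality $(1-q)+q\,u\le q\alpha\,u$ for every $u\ge r_0$: the difference $q\alpha u-(1-q)-qu=q(\alpha-1)u-(1-q)$ equals $q\alpha\ge 0$ at $u=r_0$ and is increasing in $u$ since its derivative $q(\alpha-1)$ is positive for $\alpha>1$. Substituting $u=\mu_1(z)/\mu_0(z)$ (legitimate precisely because $u\ge r_0$ throughout $[z_0,\infty)$) this yields
\[
A_\alpha^{(2)}\le (q\alpha)^\alpha\int_{z_0}^\infty \frac{\mu_1(z)^\alpha}{\mu_0(z)^{\alpha-1}}\,\ud z .
\]

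Next I would evaluate the remaining integral by completing the square: $\mu_1(z)^\alpha\mu_0(z)^{1-\alpha}=\exp(\alpha(\alpha-1)/(2\sigma^2))\cdot\varphi(z)$, where $\varphi$ is the density of $\calN(\alpha,\sigma^2)$, so the integral equals $\exp(\alpha(\alpha-1)/(2\sigma^2))\cdot\Pr[\calN(\alpha,\sigma^2)>z_0]$. Condition~(\ref{eq:alpha1}) is exactly what guarantees $z_0-\alpha\ge \tfrac12\sigma^2L+2\ln\sigma>0$, placing us deep in the Gaussian tail, so I can apply the standard estimate $\Pr[\calN(\alpha,\sigma^2)>z_0]\le \tfrac{\sigma}{\sqrt{2\pi}(z_0-\alpha)}\exp(-(z_0-\alpha)^2/(2\sigma^2))$. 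Substituting $z_0=\tfrac12+\sigma^2L$ and using the identity $\alpha(\alpha-1)-(z_0-\alpha)^2 = 2\sigma^2\alpha L-\sigma^4L^2-\sigma^2L-\tfrac14$ turns the bound into one whose logarithm is $\alpha\ln(q\alpha)+\alpha L-\tfrac12\sigma^2L^2$ plus the lower-order terms $-\tfrac{L}{2}-\tfrac{1}{8\sigma^2}+\ln\tfrac{\sigma}{\sqrt{2\pi}(z_0-\alpha)}$.

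Finally I would compare this against $\ln(0.9\,q^2\alpha(\alpha-1)/\sigma^2)$. After taking logarithms the target reduces to a single inequality of the shape $\tfrac12\sigma^2L^2\ge \alpha L+\alpha\ln(q\alpha)+(\text{lower order})$, which is exactly the content of condition~(\ref{eq:alpha2}): the terms $\ln5+2\ln\sigma$ in its numerator supply precisely the slack that pushes the dominant part of the exponent below $-\ln5-2\ln\sigma$, converting the intrinsic $q^2\alpha(\alpha-1)$ scale into the factor $1/(5\sigma^2)$ and leaving room to absorb the residual $\ln\tfrac{\sigma}{\sqrt{2\pi}(z_0-\alpha)}$, the $L/2$, and the accumulated constants. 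The hypotheses $q\le\tfrac15$ and $\sigma\ge4$ are what keep those residuals uniformly bounded, and tracking the surviving numerical constants is what pins the coefficient down to $0.9$.

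I expect the main obstacle to be this final accounting step rather than the integral estimate itself, which is mechanical. The delicate point is that the crude tail bound $(1-q)+qu\le q\alpha u$ discards the mixture structure and, taken on its own, replaces the correct $q^2$ scaling by $q^\alpha$; the nontrivial claim is that once this is paired with the exponentially small Gaussian tail the resulting estimate is still of the same order $q^2\alpha(\alpha-1)/\sigma^2$ against which $A_\alpha^{(1)}$ is measured (so that the two together give the advertised $2\alpha/\sigma^2$ RDP bound). Verifying this amounts to checking that conditions~(\ref{eq:alpha1}) and~(\ref{eq:alpha2}) are simultaneously strong enough to force the tail to vanish at the right rate and weak enough to remain non-vacuous; a secondary subtlety is making sure the pointwise bound and the Gaussian-tail estimate are only invoked in their regimes of validity ($u\ge r_0$ and $z_0>\alpha$ respectively), both of which trace back to~(\ref{eq:alpha1}).
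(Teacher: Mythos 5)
Your first two steps are exactly the paper's: the pointwise bound $(1-q)+qu\le q\alpha u$ for $u\ge r_0$, and the evaluation of the resulting integral by completing the square, with condition~(\ref{eq:alpha1}) guaranteeing $z_0-\alpha>0$ so that a Gaussian tail estimate applies. The genuine gap is in your final accounting step. You propose to substitute condition~(\ref{eq:alpha2}) for the block $\alpha L+\alpha\ln(q\alpha)$ and then ``absorb'' residuals that you claim are kept uniformly bounded by $q\le\frac15$, $\sigma\ge4$. But the quantity you must beat, $\ln\bigl(0.9\,q^2\alpha(\alpha-1)/\sigma^2\bigr)$, contains $2\ln q+\ln(\alpha-1)$, which is unbounded below in the admissible regime $q(\alpha-1)\to 0$ (e.g.\ $\alpha\to1^+$ with $q,\sigma$ fixed satisfies both (\ref{eq:alpha1}) and (\ref{eq:alpha2})). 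Concretely, after invoking~(\ref{eq:alpha2}) your bound collapses to
\[
A_\alpha^{(2)}\;\le\;\frac{1}{5\sigma^2}\,e^{-L/2}\cdot\frac{\sigma}{\sqrt{2\pi}\,(z_0-\alpha)},
\]
and since $e^{-L/2}=\bigl(1+\tfrac1{q(\alpha-1)}\bigr)^{-1/2}\approx\sqrt{q(\alpha-1)}$ while the Mills-ratio factor is only $O(1/(\sigma L))$, the right-hand side is of order $\sqrt{q(\alpha-1)}/(\sigma^3 L)$. Writing $t=q(\alpha-1)$, for small $t$ this is $\sqrt{t}/\ln(1/t)$ up to constants, which dwarfs the target $0.9\,q^2\alpha(\alpha-1)/\sigma^2\approx q\alpha t/\sigma^2$; for instance $q=\tfrac15$, $\sigma=4$, $\alpha=1+10^{-6}$ violates the inequality your plan needs. (The lemma itself remains true there: the exponent $-\tfrac12\sigma^2L^2+\alpha L+\alpha\ln(q\alpha)$ is far more negative than the amount condition~(\ref{eq:alpha2}) certifies, but your proof discards exactly that excess.)

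This is why the paper's proof is a three-way case analysis rather than a single substitution. Condition~(\ref{eq:alpha2}) is used only in the case $q(\alpha-1)\ge\tfrac13$, where $e^{-L/2}\le\tfrac32\,q(\alpha-1)$ (via $\sqrt{1+x}\ge\tfrac23x$ for $x\le3$) and $q\alpha\ge\tfrac13$ convert $\tfrac1{5\sigma^2}e^{-L/2}$ into $\tfrac9{10}\,q^2\alpha(\alpha-1)/\sigma^2$. When $q(\alpha-1)<\tfrac13$ the paper does not touch~(\ref{eq:alpha2}) at all: it keeps the prefactor $(q\alpha)^\alpha$, uses the consequence $z_0\ge2\alpha+4\ln\sigma$ of~(\ref{eq:alpha1}) to bound the exponent by $-2L\ln\sigma$, and verifies
\[
(2-\alpha)\ln q+(1-\alpha)\ln\alpha+\ln(\alpha-1)+2(L-1)\ln\sigma>-\ln(0.9)
\]
separately for $1<\alpha<2$ and for $\alpha\ge2$ with $q\alpha<\tfrac23$; retaining $(q\alpha)^\alpha$ is what supplies the powers of $q$ and the factor $(\alpha-1)$ that the target demands. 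Your sharper Mills-ratio estimate gains only a factor $1/L$ and does not remove the need for this split, so to complete your argument you must add a case distinction on $q(\alpha-1)$ of exactly this kind.
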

\begin{proof}We observe (similarly to~\cite[Lemma 16]{tCDP}) that for $z\geq z_0$, it holds that
\[
(1-q) + q \frac{\mu_1(z)}{\mu_0(z)}\leq q \alpha \frac{\mu_1(z)}{\mu_0(z)}.
\]
Thus,
\begin{align}
A_\alpha^{(2)}&=\int_{z_0}^\infty \mu_0(z)\left( (1-q) + q \frac{\mu_1(z)}{\mu_0(z)}\right)^\alpha\ud z\nonumber\\
&\leq \int_{z_0}^\infty \mu_0(z)\left(q\alpha\frac{\mu_1(z)}{\mu_0(z)}\right)^\alpha\ud z\nonumber\\
&=(q\alpha)^\alpha \frac{1}{\sigma\sqrt{2\pi}}\int_{z_0}^{\infty} \exp\left\{-\frac{z^2}{2\sigma^2} + \alpha \cdot \frac{z^2-(z-1)^2}{2\sigma^2}\right\} \ud z\nonumber\\
&=(q\alpha)^\alpha \exp\left(\frac{\alpha^2 - \alpha}{2\sigma^2}\right)  \frac{1}{\sigma\sqrt{2\pi}}\int_{z_0-\alpha}^{\infty} \exp\left(-\frac{y^2}{2\sigma^2}\right)\ud y\tag{\textrm{substituting $y\eqdef z-\alpha$}}\\
&\leq(q\alpha)^\alpha\exp\left(\frac{\alpha^2-\alpha-(z_0-\alpha)^2}{2\sigma^2}\right)\label{eq:tail}\\
&<(q\alpha)^\alpha\exp\left(\frac{-z_0(z_0-2\alpha)}{2\sigma^2}\right).\label{eq:A2_cases}
\end{align}
The inequality (\ref{eq:tail}) follows from the Gaussian tail bound $\int_t^\infty \mu_0(x)\ud x<\exp(-t^2/(2\sigma^2))$ for $t\geq 0$. To apply the bound it is necessary to check that the lower limit of the integral $t=z_0-\alpha\geq 0$. Recall that $z_0=\frac12+\sigma^2\ln\left(1+\frac1{q(\alpha-1)}\right)=\frac12+\sigma^2L$. Together, the definition of $z_0$ and condition~(\ref{eq:alpha1}) imply that
\begin{equation}\label{eq:z0bound}
z_0\geq 2\alpha+4\ln\sigma,
\end{equation}
which, given that $\sigma>1$, guarantees that $z_0>2\alpha$.

It is convenient to take the logarithm of both sides of Eq.~(\ref{eq:A2_cases}) resulting in the following:
\begin{align*}
\ln A_\alpha^{(2)}&<\alpha\ln (q\alpha)-\frac{z_0(z_0-2\alpha)}{2\sigma^2}\\
&<\alpha\ln (q\alpha)-\frac{2z_0\ln\sigma}{\sigma^2}\tag{by Eq.~(\ref{eq:z0bound})}\\
&<\alpha(\ln q+\ln\alpha)-2L\ln \sigma.
\end{align*}
We will argue that the right-hand side of the above is less than $\ln(0.9\cdot q^2\alpha(\alpha-1)/\sigma^2)$. Subtracting the two quantities, we need to demonstrate that
\begin{equation}\label{eq:ln}
(2-\alpha)\ln q+(1-\alpha)\ln \alpha+\ln(\alpha-1)+2(L-1)\ln \sigma>-\ln(0.9).
\end{equation}

The rest of the proof proceeds by case analysis.

\paragraph{Case I: $\alpha<2$} Observing that $\ln q + \ln(\alpha-1)+L=\ln(q(\alpha-1)+1)>0$, it suffices to verify that
\begin{equation}\label{eq:caseI}
(1-\alpha)(\ln q+\ln \alpha)+(L-1)(2\ln \sigma-1)>1-\ln(0.9).
\end{equation}
From $q<1/5$ and $1<\alpha<2<1/q$ we conclude that the first summand is positive. Since $L>\ln(1+\frac1q)\geq \ln 6$ and $\sigma\geq 4$, the second summand is larger than $1-\ln(0.9)$, by direct computation.

We now consider the following two cases.

\paragraph{Case IIa: $\alpha\geq 2$ and $q(\alpha-1)<1/3$} Continuing Eq.~(\ref{eq:ln}), we have
\begin{multline*}
(2-\alpha)\ln q+(1-\alpha)\ln \alpha+\ln(\alpha-1)+2(L-1)\ln \sigma= \\(2-\alpha)(\ln q+\ln \alpha)-\ln\alpha+\ln(\alpha-1)+2(L-1)\ln \sigma\geq -\ln(0.9).
\end{multline*}
The inequality holds since  the first term is non-negative ($q\alpha = q(\alpha-1)\cdot \frac{\alpha}{\alpha-1}<2/3$), $\ln(\alpha-1)-\ln\alpha\geq -\ln2$,  $L> \ln 4$, and  $\sigma\geq 4$.

\paragraph{Case IIb: $\alpha\geq 2$ and $q(\alpha-1)\geq 1/3$} Continue Eq.~(\ref{eq:A2_cases}), taking the logarithm of both sides:
\begin{align*}
\ln A_\alpha^{(2)}&<\alpha \ln(q\alpha) - \frac{z_0^2}{2\sigma^2}+\frac{\alpha z_0}{\sigma^2}\\
&<\alpha\left(\ln(q\alpha)+L+\frac1{2\sigma^2}\right)-\frac{L}2 - \frac{\sigma^2L^2}{2}\tag{by $z_0=\frac12+\sigma^2L$}\\
&\leq-\ln5 - 2\ln \sigma - \frac{L}2.
\end{align*}
The last inequality follows from Eq.~(\ref{eq:alpha2}) and by the fact that $L+\ln(q\alpha)=\ln(q\alpha+\frac{\alpha}{\alpha-1})>0$.

Exponentiating both sides, we have
\begin{align*}
A_\alpha^{(2)}&< \frac1{5\sigma^2\sqrt{1+\frac1{q(\alpha-1)}}}\\
&\leq \frac3{10}\cdot\frac{q(\alpha-1)}{\sigma^2}\tag{$\sqrt{1+x}\geq\frac23x$ for $x\leq 3$, and plugging in $x\eqdef\frac1{q(\alpha-1)}$}\\
&< \frac9{10}\frac{q^2\alpha(\alpha-1)}{\sigma^2}\tag{by $q\alpha> q(\alpha-1)\geq 1/3$}
\end{align*}
as needed.
\end{proof}

Our main theorem holds under the conditions of Lemmas~\ref{lem:a1} and~\ref{lem:a2}:
\begin{thm}\label{thm:sgm}If $q\leq \frac15$, $\sigma\geq 4$, and $\alpha$ satisfy
\begin{align*}
1<\alpha&\leq {\textstyle\frac12}\sigma^2L-2\ln \sigma,\\
\alpha&\leq \frac{\frac12\sigma^2 L^2-\ln5-2\ln\sigma}{L+\ln (q\alpha)+1/(2\sigma^2)},
\end{align*}
where $L=\ln\left(1+\frac1{q(\alpha-1)}\right)$, then SGM applied to a function of $\ell_2$-sensitivity 1 satisfies $(\alpha, \eps)$-RDP where
\[
\eps\eqdef 2q^2 \alpha/\sigma^2.
\]
\end{thm}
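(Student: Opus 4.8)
The plan is to assemble the two integral bounds already established and then convert the resulting bound on $A_\alpha$ into a statement about $\eps$. Recall from the start of this section that SGM applied to a function of $\ell_2$-sensitivity $1$ satisfies $(\alpha,\eps)$-RDP whenever $\eps\le\frac1{\alpha-1}\ln\max(A_\alpha,B_\alpha)$, and that the corollary to Theorem~\ref{th:general_bound} gives $A_\alpha\ge B_\alpha$, so $\max(A_\alpha,B_\alpha)=A_\alpha$. Hence it suffices to show that $\frac1{\alpha-1}\ln A_\alpha\le 2q^2\alpha/\sigma^2$, equivalently
\[
A_\alpha=A_\alpha^{(1)}+A_\alpha^{(2)}\le\exp\!\left(\frac{2q^2\alpha(\alpha-1)}{\sigma^2}\right).
\]

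The key step is to add the bounds of Lemmas~\ref{lem:a1} and~\ref{lem:a2} and check that the combined leading coefficient stays below $2$. Writing $t\eqdef q^2\alpha(\alpha-1)/\sigma^2$, the two lemmas give
\[
A_\alpha\le 1+\sigma^2\!\left(e^{1/\sigma^2}-1\right)t+0.9\,t=1+\left(\sigma^2(e^{1/\sigma^2}-1)+0.9\right)t.
\]
Since $\frac{e^s-1}{s}$ is increasing in $s$, the quantity $\sigma^2(e^{1/\sigma^2}-1)$ is decreasing in $\sigma$ and is therefore maximized over $\sigma\ge 4$ at $\sigma=4$, where it evaluates to $16(e^{1/16}-1)\approx 1.03$. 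Thus the bracketed coefficient is at most $1.03+0.9<2$, so $A_\alpha\le 1+2t$.

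Finally I would invoke the elementary inequality $1+s\le e^s$ with $s\eqdef 2t=2q^2\alpha(\alpha-1)/\sigma^2$ to obtain $A_\alpha\le\exp(2t)$. Taking logarithms and dividing by $\alpha-1$ then yields $\eps=\frac1{\alpha-1}\ln A_\alpha\le 2q^2\alpha/\sigma^2$, as claimed; the hypotheses on $q$, $\sigma$, and $\alpha$ are exactly those required to invoke Lemmas~\ref{lem:a1} and~\ref{lem:a2}. There is no genuine obstacle here: the argument is a clean assembly of the preceding results, and the only point to watch is the numerical slack, namely that the $A_\alpha^{(1)}$ coefficient $\sigma^2(e^{1/\sigma^2}-1)$ (which tends to $1$ as $\sigma\to\infty$ but sits slightly above $1$ at $\sigma=4$) plus the $0.9$ contributed by $A_\alpha^{(2)}$ still comes in strictly under the target constant $2$.
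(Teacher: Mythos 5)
Your proposal is correct and follows essentially the same route as the paper's own proof: sum the bounds of Lemmas~\ref{lem:a1} and~\ref{lem:a2}, check that the combined coefficient stays at or below $2$ (the paper uses $\exp(1/\sigma^2)-1\leq 1.1/\sigma^2$ for $\sigma\geq 4$, while you bound $\sigma^2(e^{1/\sigma^2}-1)\leq 16(e^{1/16}-1)\approx 1.03$ by monotonicity --- a slightly sharper version of the same check), and finish with $\ln(1+x)\leq x$. Your explicit invocation of $A_\alpha\geq B_\alpha$ to reduce to bounding $A_\alpha$ is a point the paper leaves implicit, but the argument is the same.
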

\begin{proof}Recall that to state an RDP guarantee on SGM it is sufficient to bound $A_\alpha$. Applying the results of Lemmas~\ref{lem:a1} and~\ref{lem:a2},
\begin{multline*}
A_\alpha=A_\alpha^{(1)}+A_\alpha^{(2)}\leq 1+q^2 \alpha (\alpha - 1)\left(\exp(1/\sigma^2)-1\right) + 0.9\cdot q^2 \alpha (\alpha-1)/\sigma^2\leq\\
1+2q^2 \alpha (\alpha-1)/\sigma^2.
\end{multline*}
(The last inequality due to $\exp(1/\sigma^2)-1\leq 1.1/\sigma^2$ for $\sigma\geq 4$.)

Finally, since $\ln(1+x)<x$ for $x\geq 0$, we conclude that SGM satisfies $(\alpha, \eps)$-RDP where
$\eps=\frac1{\alpha-1}\ln A_\alpha\leq 2q^2 \alpha/\sigma^2$ as needed.
\end{proof}

\subsection{Numerically Stable Computation}\label{ss:numerical}

Na\"ively, $A_\alpha$ can be approximated as an integral using standard numerical libraries. It however leads to the problem of computing an integral over the whole real line of a quantity that can vary a lot. We sidestep this difficulty by expressing $A_\alpha$ as a finite sum (or a convergent series), swap the order of the integration and summation operators, and compute the integrals analytically. 

To compute $A_\alpha = \E_{z \sim \mu_0} [ (\mu(z) / \mu_0(z))^\alpha]$, we write
\begin{equation}\label{eq:mu_over_mu0}
\left(\frac{\mu(z)}{\mu_0(z)}\right)^\alpha = \left( (1-q) + q \frac{\mu_1(z)}{\mu_0(z)}\right)^\alpha
\end{equation}
and consider two cases.

\paragraph{Case I: Integer $\alpha$.} Applying the binomial expansion to (\ref{eq:mu_over_mu0}), we have
 \[
 \left(\frac{\mu(z)}{\mu_0(z)}\right)^\alpha= \sum_{k=0}^{\alpha} {\alpha \choose k}  (1-q)^{\alpha-k}q^{k} \left(\frac{\mu_1(z)}{\mu_0(z)}\right)^k.
\]
Thus it suffices to compute for $k \in \{0,\dots,\alpha\}$ the expectation
\[
\E_{z \sim \mu_0}\left[\left(\frac{\mu_1(z)}{\mu_0(z)}\right)^k\right].
\]
We observe that the terms of the form $\E_{z \sim \mu_0} \left[\left(\frac{\mu_1(z)}{\mu_0(z)}\right)^k\right]$ have an analytical closed form that can be obtained by integration. Indeed,
\begin{align*}
  \E_{z \sim \mu_0} \left[\left(\frac{\mu_1(z)}{\mu_0(z)}\right)^k\right]
  &= \frac{1}{\sigma\sqrt{2\pi}}\int_{-\infty}^{\infty} \exp\left\{-\frac{x^2}{2\sigma^2} + k \cdot \frac{x^2-(x-1)^2}{2\sigma^2}\right\} \ud x\\
  &= \frac{1}{\sigma\sqrt{2\pi}}\int_{-\infty}^{\infty} \exp\left\{-\frac{x^2}{2\sigma^2} + \frac{2kx - k}{2\sigma^2}\right\} \ud x\\
  &= \frac{1}{\sigma\sqrt{2\pi}}\int_{-\infty}^{\infty} \exp\left\{-\frac{(x-k)^2}{2\sigma^2} + \frac{ k^2-k}{2\sigma^2}\right\} \ud x\\
  &= \exp\left(\frac{k^2 - k}{2\sigma^2}\right)  \frac{1}{\sigma\sqrt{2\pi}}\int_{-\infty}^{\infty} \exp\left(-\frac{y^2}{2\sigma^2}\right)\ud y\tag{\textrm{substituting $y\eqdef x-k$}}\\
  &= \exp\left(\frac{k^2 - k}{2\sigma^2}\right).
\end{align*}

\paragraph{Case II. Fractional $\alpha$.} To rewrite  (\ref{eq:mu_over_mu0}) as a convergent series, consider two cases depending on how $1-q$ compares with $q\mu_1(z)/\mu_0(z)$. The inflection point is $z_1$ where the two quantities are equal:
\begin{align*}
(1-q)\mu_0(z_1)=q\mu_1(z_1) &\myiff  1-q=q e^{\frac{2z_1-1}{2\sigma^2}}\\
&\myiff z_1=\frac12+\sigma^2\ln(q^{-1} - 1).
\end{align*}
Thus we can express 
\begin{equation}
\left(\frac{\mu(z)}{\mu_0(z)}\right)^\alpha=\begin{cases} \sum_{k=0}^\infty {\alpha \choose k}  (1-q)^{\alpha-k}q^{k} \left(\frac{\mu_1(z)}{\mu_0(z)}\right)^k\qquad\text{when } z\leq z_1\\
\sum_{k=0}^\infty {\alpha \choose k}  (1-q)^k q^{\alpha-k} \left(\frac{\mu_1(z)}{\mu_0(z)}\right)^{\alpha-k}\quad\text{when } z> z_1\end{cases}.\label{eq:series}
\end{equation}

Analogously to the case of integer $\alpha$, we compute the expectations of both series under $z\sim \mu_0$, where the integrals are taken over the half lines $(-\infty, z_1]$ and $[z_1, +\infty)$:
\begin{align*}
\int_{-\infty}^{z_1}\mu_0(x) \left(\frac{\mu_1(x)}{\mu_0(x)}\right)^k\ud x
&= \frac{1}{\sigma\sqrt{2\pi}}\int_{-\infty}^{z_1} \exp\left\{-\frac{x^2}{2\sigma^2} + k \cdot \frac{x^2-(x-1)^2}{2\sigma^2}\right\} \ud x\\
&=\exp\left(\frac{k^2 - k}{2\sigma^2}\right)  \frac{1}{\sigma\sqrt{2\pi}}\int_{-\infty}^{z_1-k} \exp\left(-\frac{y^2}{2\sigma^2}\right)\ud y\\
&=\frac12\exp\left(\frac{k^2 - k}{2\sigma^2}\right)\erfc\left(\frac{k-z_1}{\sqrt{2}\sigma}\right),\\
\int_{z_1}^{\infty}\mu_0(x) \left(\frac{\mu_1(x)}{\mu_0(x)}\right)^k\ud x
&= \frac12\exp\left(\frac{k^2 - k}{2\sigma^2}\right)\erfc\left(\frac{z_1-k}{\sqrt{2}\sigma}\right).
\end{align*}

The computation done in the privacy accountant proceeds by plugging  in these quantities into the series~(\ref{eq:series}), and carrying out the summation to convergence.

Upper bounds (Theorem \ref{thm:sgm}) and exact computations are compared in~\autoref{fig:two_plots}.

\begin{figure}
	\centering
	\begin{subfigure}{.45\textwidth}
	\centering
	\includegraphics[width=1.\linewidth]{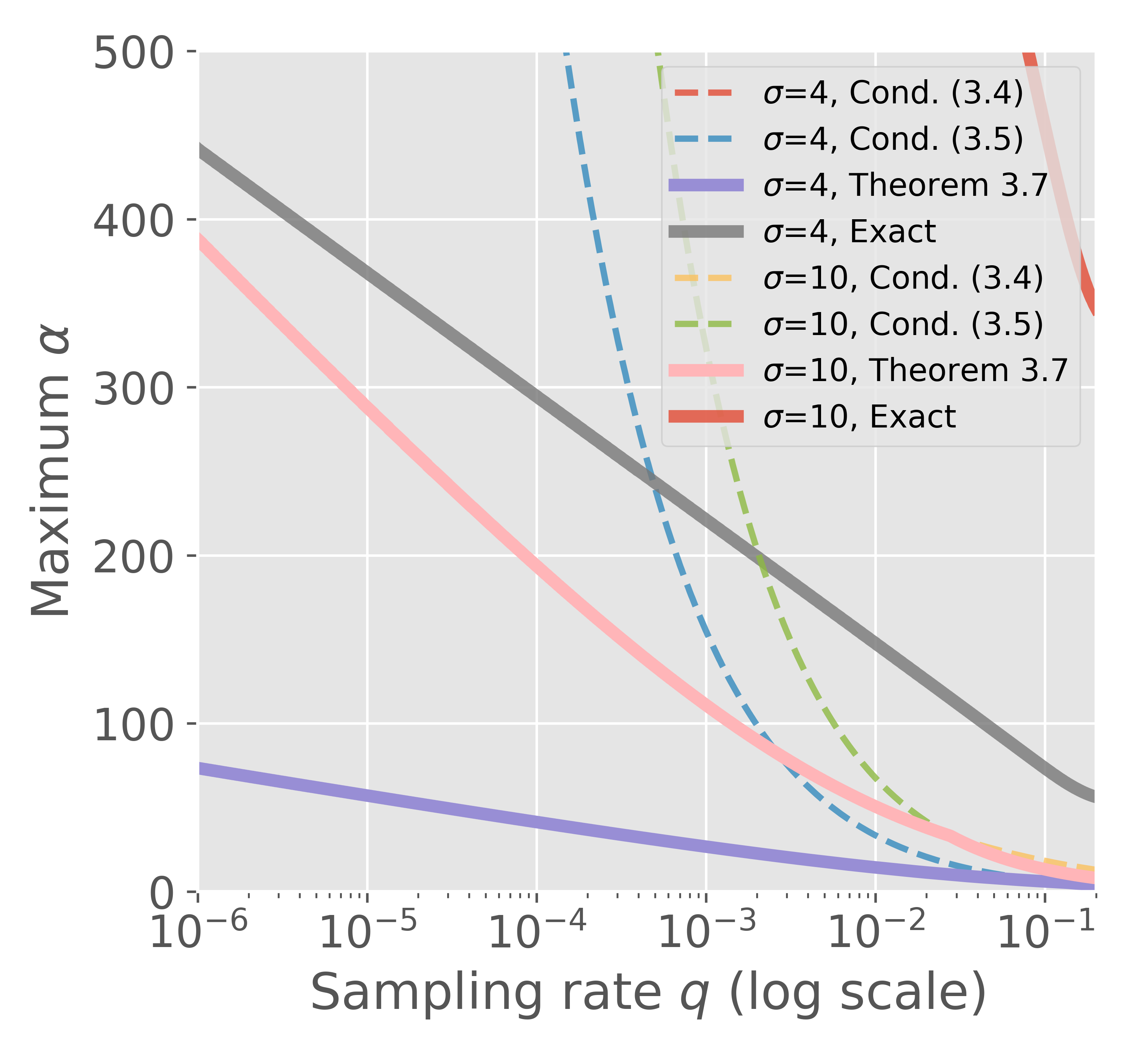}
\end{subfigure}
\begin{subfigure}{.45\textwidth}
	\centering
	\includegraphics[width=1.\linewidth]{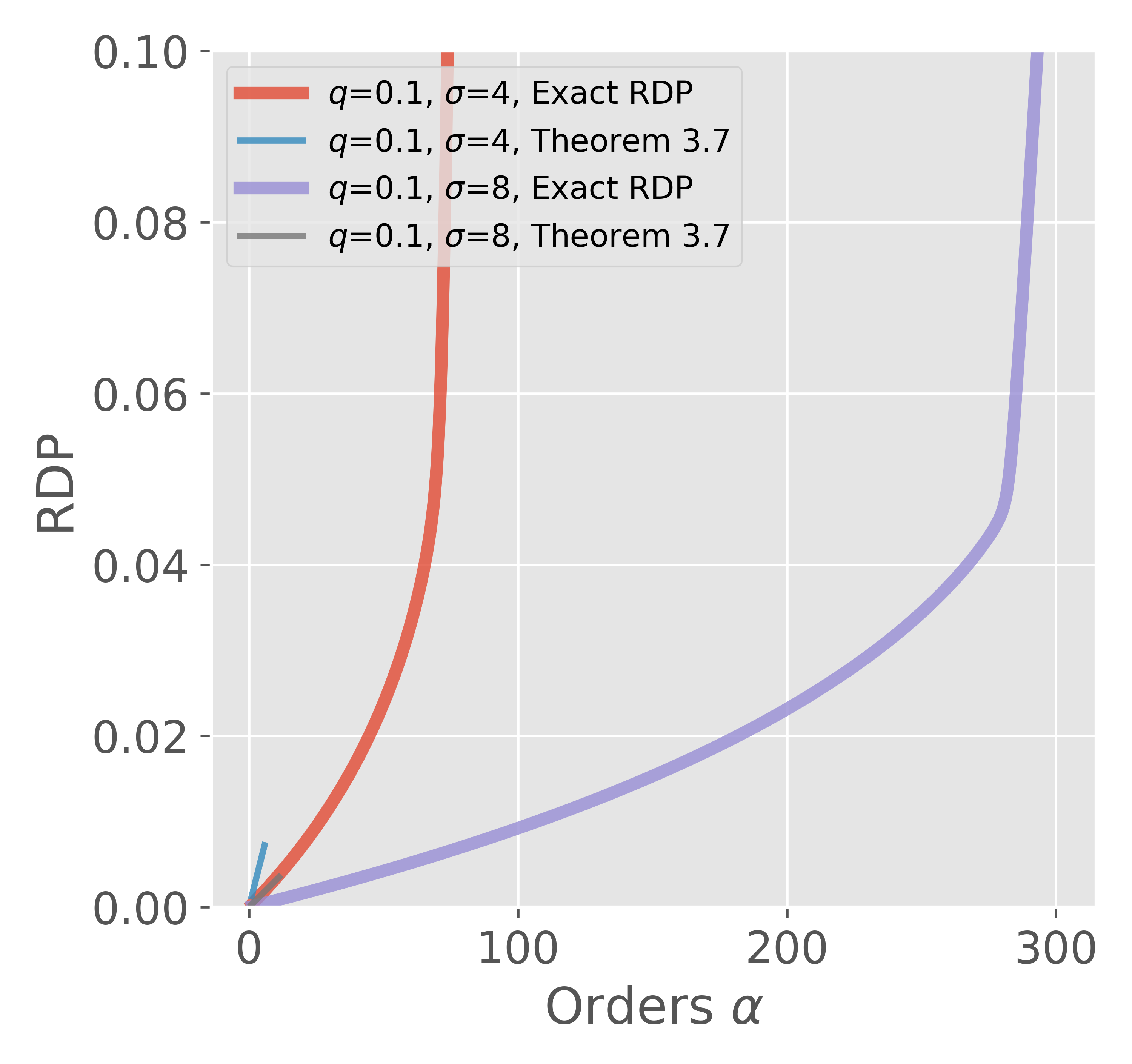}
\end{subfigure}
\caption{\textbf{Left:} Maximum $\alpha$ as a function of $q$ for $\sigma=4, 10$. For each $\sigma$ four graphs are plotted: bounds due to conditions~\ref{eq:alpha1} and~\ref{eq:alpha2}, their minimum, and the exact $\alpha$ so that $\eps=2q^2\alpha/\sigma^2$. \textbf{Right:} RDP of $\SG_{q,\sigma}$ computed exactly (Section~\ref{ss:numerical}) and bounded according to Theorem~\ref{thm:sgm}.}
\label{fig:two_plots}\end{figure}

\section{Discussion}\label{s:discussion}

The notions of CDP, zCDP, Moments accountant and \renyi\ DP are closely related in that they control the moments of the privacy loss random variable. In this section, we clarify their differences.

For two adjacent datasets $S$ and $S'$, the privacy loss of  a mechanism $\M$ at an outcome $z$ is defined as 
\begin{equation}\label{eq:privacyloss}
  c(z; \M,  S, S') \eqdef \log \frac{\Pr[\M(S) = z]}{\Pr[\M(S') = z]}.
\end{equation}

For continuous output spaces, the probability above is replaced by the probability density function. The various definitions deal with moment generating function of the privacy loss random variable. Define
\begin{align*}
M_{\alpha}(\M, S, S') &\eqdef \E_{z \sim \M(S)}[\exp(\alpha\cdot c(z; \M, S, S'))]\\
	&= \E_{z \sim \M(S)}[ \left[\left(\frac{\Pr[\M(S) = z]}{\Pr[\M(S') = z]}\right)^{\alpha}\right]\\
	&= \E_{z \sim \M(S')}\left[\left(\frac{\Pr[\M(S) = z]}{\Pr[\M(S') = z]}\right)^{\alpha+1}\right].
\end{align*}
Recalling the definition of \renyi\ divergence between distributions:
\[
	\rdalpha{P}{Q} = \frac{1}{\alpha-1}\log \E_{z \sim Q} \left(\frac{P(z)}{Q(z)}\right)^{\alpha},
\]
it follows that
\begin{equation}
	M_{\alpha}(\M, S, S') = \exp(\alpha\cdot \rd{\alpha+1}{\M(S')}{\M(S)}.\label{eq:mvsd}
\end{equation}
The following are equivalent definitions of CDP, zCDP, tCDP and RDP:
\begin{defi}
A mechanism $\M$ satisfies $(\mu, \tau)$-CDP if for all adjacent datasets $S, S'$ and for all $\alpha \geq 1$
\begin{align*}
	\E_{z \sim \M(S)}[c(z; \M, S, S')] &\leq \mu,\\
	M_{\alpha}(\M, S, S') &\leq \exp\left(\frac{\alpha^2\beta^2}{2}\right).
\end{align*}
A mechanism $\M$ satisfies $(\xi, \rho)$-zCDP if for all adjacent datasets $S, S'$,
\[
	\forall \alpha > 0\colon M_{\alpha}(\M, S, S') \leq  \exp(\alpha(\xi + \rho(\alpha+1)).
\]
A shorthand for $(0, \rho)$-zCDP is $\rho$-zCDP. Thus a mechanism $\M$ satisfies $\rho$-zCDP if for all adjacent datasets $S, S'$,
\[
	\forall \alpha > 0\colon M_{\alpha}(\M, S, S') \leq  \exp(\rho\alpha(\alpha+1)).
\]
A mechanism $\M$ satisfies $(\rho, \omega)$-tCDP if for all adjacent datasets $S, S'$,
\[
	\forall \alpha \in (0, \omega)\colon M_{\alpha}(\M, S, S') \leq \exp(\rho\alpha(\alpha + 1)).
\]
A mechanism $\M$ satisfies $(\alpha, \eps)$-RDP if for all adjacent datasets $S, S'$,
\[
	M_{\alpha-1}(\M, S, S') \leq \exp((\alpha-1)\eps).
\]
\end{defi}
While here we have stated the definitions in terms of $M_{\alpha}$, using (\ref{eq:mvsd}), one can translate these to bounds on the \renyi\ divergence; in some cases that leads to cleaner looking definitions. Going down the list, the definitions get less restrictive and have more parameters. While zCDP suffices for many purposes, SGM is an important mechanism that does not satisfy zCDP, but satisfies RDP for a suitable range of $\alpha$.

While the above were proposed as standalone privacy definitions, the moments accountant was proposed as an accounting mechanism that tracks (the logarithm of) $M_{\alpha}$ directly and converts the resulting bound to an $(\eps, \delta)$-DP bound.

\bibliographystyle{alpha}
\bibliography{sample_gaussian}

\appendix
\end{document}